\documentclass[letterpaper]{article} 
\usepackage{aaai24}  
\usepackage{times}  
\usepackage{helvet}  
\usepackage{courier}  
\usepackage[hyphens]{url}  
\usepackage{graphicx} 
\urlstyle{rm} 
  
\usepackage{natbib}  
\usepackage{adjustbox}
\usepackage{caption} 
\frenchspacing  
\setlength{\pdfpagewidth}{8.5in}  
\setlength{\pdfpageheight}{11in}  
\usepackage{algorithm}
\usepackage{algpseudocode}
\usepackage{dsfont}
\usepackage{bm}
\usepackage{booktabs}
\usepackage{makecell} 
\renewcommand{\toprule}{\Xhline{0.1em}}
\renewcommand{\midrule}{\Xhline{0.05em}}
\renewcommand{\bottomrule}{\Xhline{0.09em}}
\usepackage{multirow}

\usepackage[table,xcdraw]{xcolor}
\pdfinfo{
/TemplateVersion (2024.1)
}
\usepackage{amsmath}
\usepackage{amsthm}

\setcounter{secnumdepth}{2} 
\newtheorem{thm}{Theorem}

\title{Selective Focus: Investigating Semantics Sensitivity \\in Post-training Quantization for Lane Detection}

\author {
    Yunqian Fan\textsuperscript{\rm 1,2},
    Xiuying Wei\textsuperscript{\rm 2},
    Ruihao Gong\textsuperscript{\rm 2,4},
    Yuqing Ma\textsuperscript{\rm 3, 4},
    Xiangguo Zhang\textsuperscript{\rm 2}
    Qi Zhang\textsuperscript{\rm 2}
    Xianglong Liu\textsuperscript{\rm 4}\thanks{Corresponding author.}
}
\affiliations {
    \textsuperscript{\rm 1}School of Information Science and Technology, ShanghaiTech University 
    \textsuperscript{\rm 2}SenseTime Research \\
    \textsuperscript{\rm 3}Institute of Artificial Intelligence, Beihang University, Beijing, China \\
    \textsuperscript{\rm 4}State Key Laboratory of Complex \& Critical Software Environment, Beihang University, Beijing, China  \\
    
    fanyq2022@shanghaitech.edu.cn, weixiuying966@gmail.com, \\ \{gongruihao, zhangxiangguo, zhangqi3\}@sensetime.com, 
    \{mayuqing, xlliu\}@buaa.edu.cn
}

\usepackage{bibentry}

\begin{document}

\maketitle

\begin{abstract}
	Lane detection (LD) plays a crucial role in enhancing the L2+ capabilities of autonomous driving, capturing widespread attention. The Post-Processing Quantization (PTQ) could facilitate the practical application of LD models, enabling fast speeds and limited memories without labeled data. However, prior PTQ methods do not consider the complex LD outputs that contain physical semantics, such as offsets, locations, etc., and thus cannot be directly applied to LD models. 
	In this paper, we pioneeringly investigate \textbf{semantic sensitivity} to post-processing for lane detection with a novel Lane Distortion Score. Moreover, we identify two main factors impacting the LD performance after quantization, namely intra-head sensitivity and inter-head sensitivity, where a small quantization error in specific semantics can cause significant lane distortion. 
	Thus, we propose a Selective Focus framework deployed with Semantic Guided Focus and Sensitivity Aware Selection modules, to incorporate post-processing information into PTQ reconstruction. Based on the observed intra-head sensitivity, Semantic Guided Focus is introduced to prioritize foreground-related semantics using a practical proxy. For inter-head sensitivity, we present Sensitivity Aware Selection, efficiently recognizing influential prediction heads and refining the optimization objectives at runtime. 
	Extensive experiments have been done on a wide variety of models including keypoint-, anchor-, curve-, and segmentation-based ones. Our method produces quantized models in minutes on a single GPU and can achieve 6.4\% F1 Score improvement on the CULane dataset.
\end{abstract}

\begin{figure*}[h]
	\centering
	\includegraphics{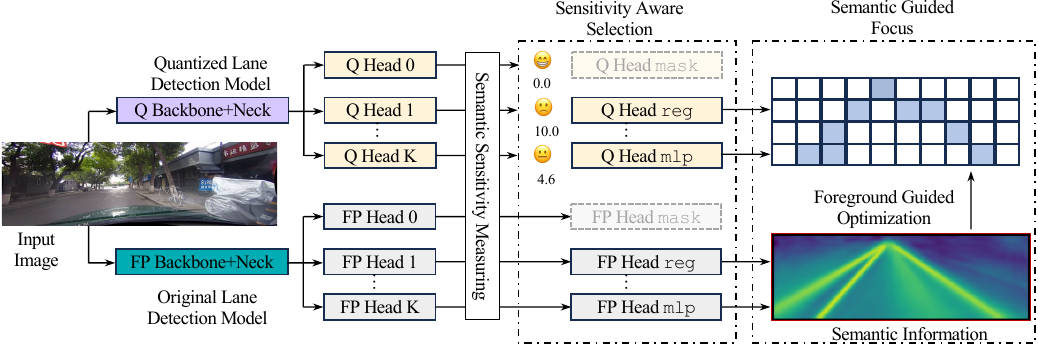}
	\caption{The framework of Selective Focus. Three modules are designed to mine the semantics sensitivity in the post-training quantized lane detection. \textbf{Semantic Sensitivity Measuring} measures the semantic sensitivity quantitatively; \textbf{Sensitivity Aware Selection} adapts the optimization objectives according to dynamic sensitivity. \textbf{Semantic Guided Focus} enables PTQ to focus on the foreground with a practical proxy.}
	\label{fig:framework}
\end{figure*}
\section{Introduction}

Deep neural networks have recently sparked great interest in autonomous driving. 
As a fundamental component in autonomous driving, lane detection~(LD) is fundamental for high-level functions such as lane departure warning, lane departure prevention, etc. Lane detection~\cite{ufld, ganet, laneatt} has garnered significant attention and undergone in-depth research, leading to substantial advancements. However, LD models are often required to run on edge devices within limited sizes, necessitating quantization and introducing formidable challenges to detection performance.

There are two prevalent quantization techniques: Quantization Aware Training (QAT) e.g. \cite{pact, lsq, lsq+, tqt}  and Post-
training Quantization (PTQ) e.g. \cite{adaquant, easyquant, qdrop, nwq}. Though QAT can often yield promising performance, it requires a longer training duration and
the whole labeled dataset, raising computation costs and safety concerns. In contrast, PTQ methods have attracted wide attention from both
industry and academia due to their speed and label-free nature.  Recently, some PTQ methods~\cite{adaround, brecq, qdrop} propose to tune the weight by reconstructing the original outputs, bringing better performance. 

LD models typically regress semantic outputs with physical meanings such as offsets, locations, and angles, and employ complex post-processing to handle these outputs. Notably, the sensitivity of these semantic outputs to post-processing varies, with certain elements having the potential to induce significant lane deformation even with minor perturbations. Prior PTQ approaches employing direct reconstruction methods on feature maps treat all outputs uniformly, overlooking post-processing information.

In this paper, we first propose the \textbf{semantic sensitivity} in lane detection models and introduce a Lane Distortion Score to measure the quantization distortion between the original LD model and the corresponding quantized counterpart. Subsequently, we investigate these sensitivities from two perspectives, namely the intra-head sensitivity and the inter-head sensitivity. Specifically, the intra-head sensitivity highlights the heightened sensitivity of a limited number of foreground (lane) regions to quantization noise during post-processing, while the inter-head sensitivity indicates the varying degrees of sensitivity to the quantization of different semantic heads over time, as shown in Figure~\ref{fig:intra}.    

To address the sensitivity problems above, we propose a Selective Focus framework to alleviate the semantic sensitivity in post-training quantization for LD models, enhancing the performance. The proposed framework is deployed with a Semantic Guided Focus module and a Sensitivity Aware Selection module, respectively targeting the intra-head sensitivity and the inter-head sensitivity. First, the Semantic Guided Focus generates practical proxies of masks from semantics, enhancing the precision of foreground lanes in post-processing.
This method guides PTQ to optimize these pivotal areas. Furthermore, the Sensitivity Aware Selection module refines optimization objectives by querying efficiently the real-time sensitivity of each head through our Lane Distortion Score across heads.
The proposed framework could tune the models efficiently by introducing the semantic information of the post-process to the optimization implicitly.

To the best of our knowledge, our work is the first to identify the role of semantic sensitivity in PTQ for lane detection models, and we hope it could offer new insight to the community. 
Extensive experiments on the widely-used CULane dataset and various leading methods validate the effectiveness and efficiency of the proposed Selective Focus framework. 
In summary, our contributions are listed as follows:
\begin{itemize}
	\item We introduce the concept of semantics sensitivity in post-processing quantization for lane detection, proposing the Lane Distortion Score metric. Our Selective Focus framework, composed of the Semantic Guided Focus and Sensitivity Aware Selection modules, addresses both intra-head and inter-head sensitivities.
	\item  Considering the intra-head semantics sensitivity, the Semantic Guided Focus module generates practical proxy masks from semantics and thus guides PTQ to optimize these pivotal areas.
	\item  Handling inter-head semantics sensitivity, the Sensitivity Aware Selection module efficiently adjusts optimization objectives based on each head's real-time sensitivity measured by our Lane Distortion Score.
	\item  Our empirical tests across datasets, models, and quantization setups endorse our approach's efficacy. Notably, under the 4-bit setup, performance gains exceed up to 6.4\%, on benchmark models with a 6x acceleration. 
\end{itemize}

\section{Related Works}
                
\subsection{Lane Detection Models}

The LD task aims to produce lane representations in the given images. Despite the different methods, they all try to set the foreground (lanes) apart from the background. 
Nowadays the models generally use Convolutional Neural Networks~(CNNs) to extract lane features, which can be divided into keypoint-based, anchor-based, segmentation-based, and parameterized-curve-based methods.

\textbf{Keypoint-based Methods} predict the mask of lane points and regress them to the real location on the corresponding lanes.  CondLaneNet~\cite{condlane} regresses offset between adjacent keypoints, while GANet~\cite{ganet} regresses offset between each keypoint to the start point of its lane. \textbf{Anchor-based Methods} model lanes as pre-defined pairs of start point and angle, and then regress lanes among them. LaneATT~\cite{laneatt} proposes an anchor attention module to aggregate global information for the regression. CLRNet~\cite{clrnet} refines the proposals with features at different scales.
\textbf{Segmentation-based Methods} predict the mask of all the lanes on the image and then cluster them into different lanes. SCNN~\cite{scnn} adopts slice-by-slice convolution modules to aggregate surrounding spatial information. RESA~\cite{resa} further extends the mechanism to aggregate global spatial to every pixel.
\textbf{Curve-based Methods} model lanes as  singular curves,  rather than sets of discrete points. For instance, LSTR~\cite{lstr} predicts the parameters for cubic curves and BézierLaneNet~\cite{beznet} predicts for Bézier curves. 

\subsection{Post-traning Quantization}

Quantization is widely used in deep learning model deployment to substantially cut down memory and computation requirements during inference, which is required by the LD models. Compared to QAT~\cite{qat, dsq, tqt, lsq,lsq+} which requires large GPU effort and the whole dataset, PTQ methods have sparked great popularity these days due to their speed and label-free property.

Common PTQ methods like OMSE~\cite{omse} and ACIQ~\cite{aciq} often identify quantization parameters to minimize the quantized error for tensors, requiring a few batches of forward passes. More recently, some methods have evolved to slightly tune the weights and reconstruct the original outputs. \cite{rfm} improves the accuracy by setting a well-defined target for the face recognition task.  AdaRound~\cite{adaround} initially proposes that adjusting the weight within a small space can be beneficial, and their layer-wise output reconstruction can yield more favorable results with only a marginal increase in optimization time. Building on this, BRECQ~\cite{brecq} suggests that the outputs of each layer still exhibit some disparity from the final outputs and thus proposes adopting a block-wise reconstruction scheme. Later, QDrop~\cite{qdrop} investigates the activation quantization under this setting and introduces random activation quantization dropping during tuning, which benefits the performance.

We also opt for model tuning through reconstruction. Nevertheless, prior techniques haven't been applied to lane detection models with multiple heads and complex post-processing functions. We discover that directly reconstructing feature maps for these models overlooks the important post-processing information, ultimately leading to sub-optimal solutions.

\begin{figure}
	\centering
	\includegraphics{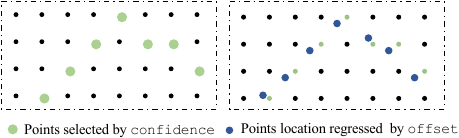}
	\caption{Example of confidence and semantics in post-process. (a) The confidence is used to predict whether a keypoint is at a certain location. (b) The offset is regressed to the shift between the real keypoint and its downscaled grid. Offset is only valid in the indices with positive confidence prediction.}
	\label{fig:div}
\end{figure}

\section{Preliminaries}\label{sec:prel}

\subsection{Notation}
In the context of lane detection models, it's important to note that each head $i$ encompasses two distinct functions: $\mathcal{S}^i(\cdot)$ and $\mathcal{C}^i(\cdot)$. The former function yields outputs with physical significance, which we refer to as \textbf{semantics}. 
These semantics are linked to physical attributes such as distance and angle. 
The latter function, $\mathcal{C}^i(\cdot)$, produces confidence outputs for each head's semantics. An illustration of how the post-process deals with confidence and semantics is shown in Figure~\ref{fig:div}.

Moreover, let $\mathbf{x}$ represent the vectors of unlabeled data from the calibration dataset $\mathcal{D}$. We use the symbol $\odot$ to indicate element-wise multiplication. Finally, $\hat{\mathcal{S}}$ generates new semantics derived from quantized models. 

\subsection{Problem Definition}

Tuning-based PTQ, as mentioned in the last section, focuses on minimizing the task quantization loss as opposed to minimizing local distance, given by:
\begin{equation}\label{eq:prevloss}\small
	\min_{\mathbf{w}} \mathds{E}_{\mathbf{x}\sim \mathcal{D}} \left(\sum_i \left\|{\hat{\mathcal{S}}}^i(\mathbf{x}) - \mathcal{S}^i(\mathbf{x}) \right\|^2_{F} + \left\|{\hat{\mathcal{C}}}^i(\mathbf{x}) - \mathcal{C}^i(\mathbf{x}) \right\|^2_{F} \right), 
\end{equation}  
where the first term corresponds to the reconstruction of the fully pixel-wise semantic outputs, while the second term pertains to the reconstruction of the confidence values. Compression methods \cite{adaround, brecq} optimizes the above equation towards layer-wise and block-wise approximation.

However, such an optimization is not suitable in LD models due to their intricate post-processing steps and semantics rooted in physical interpretation.  In the next section, we identify the importance of semantic sensitivity if the post-process. Thus, neglecting the valuable insights offered by post-processing in the optimization objective would ultimately yield less favorable outcomes.

\begin{figure}
	\centering
	\includegraphics{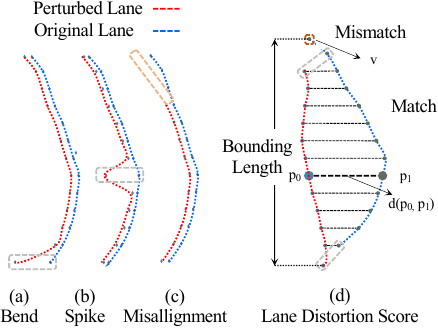}
	\caption{Common types of lane distortion caused by slight perturbation and the proposed Lane Distortion Score. (a) Bend: unexpected shifts at the terminals; (b) Spike: shifts in the middle; (c) Misalignment: missing or extra lane points. Any kind of lane distortion can be represented as a combination of those three types of distortion. (d) We measure the distortion between lanes with two types of point relationships: match and mismatch.}
	\label{fig:common}
\end{figure}

\section{Method}

In this section, we reveal an important factor that significantly impacts PTQ performance in lane detection models: the semantic sensitivity to post-processing, which has been overlooked by other PTQ research before. Then, comprehensive investigations are conducted from both intra-head and inter-head aspects. Building on sensitivity observations within and across heads, we propose a Selective Focus framework including  two novel modules, Semantic Guided Focus, and Sensitivity Aware Selection, to allocate appropriate attention to different semantics. Our framework implicitly introduces the post-processing information into quantization optimization. The pipeline is depicted in Figure~\ref{fig:framework}.

\subsection{Semantic Sensitivity}\label{sec:ssm}

\subsubsection{Sensitivity to post-process} 

Owing to the complexity of optimizing the post-process, prevailing PTQ approaches are confined to adjusting model parameters to align quantized head outputs with their full-precision counterparts (Equation~\ref{eq:prevloss}) without considering the post-process. Nevertheless, we find the importance of the post-process procedure in ensuring accurate lane generation. Disregarding post-processing information during the quantization optimization phase can result in pronounced distortions, including abrupt bends, spikes, and misalignments, even with a marginal value of (Equation~\ref{eq:prevloss}), as illustrated in Figure~\ref{fig:common}.

Motivated by this, we propose to investigate semantic sensitivity, where some outputs of heads can be so important for later post-process that small quantization errors of them can cause severe lane distortion. Incorporating information from the post-process into our model optimization would pave the way for more effective semantic reconstruction.

\subsubsection{Lane Distortion Score}

To study the sensitivity, a quantitative evaluation of lane distortion becomes imperative. Given the frequently localized deviations in lanes induced by quantization (as depicted in Figure~\ref{fig:common}), we abstain from employing the conventional Intersection over Union (IoU) metric~\cite{scnn,beznet} which overlooks these local distortions. In response, a direct but effective metric is introduced, which measures the shifts of all points from the perturbed lane to the original one, as shown in Figure~\ref{fig:common}. The score of matched points is their distance, and the score of mismatched points is a fixed penalty score $v$. Concretely, our devised metric first matches points between perturbed and normal lanes (set $\mathds{M}$), then calculates the distance for matched points ($d(p_0, p_1)$) and the penalty from mismatched ones:
\begin{equation}\label{eq:score}
	score =\sum_{(p_0, p_1) \in \mathds{M}} \frac{d(p_0, p_1)}{b(\mathds{M})} + n v,
\end{equation} 
where the bounding length normalization ($b(\mathds{M})$) is applied to accommodate lanes of varying lengths and $n$ is the number of mismatched points. More explanation of this score is available in Appendix~B. 

Equipped with the Lane Distortion Score, we can compute the distortion of lanes under perturbation, which strongly supports quantitively analysis of semantic sensitivity to post-process and further method design.

\begin{figure}[hbt]
	\centering
	\includegraphics{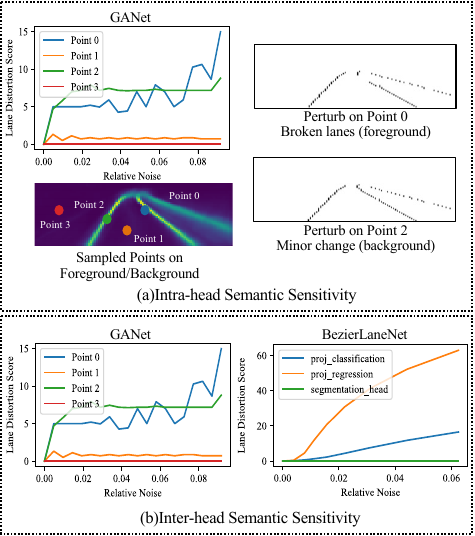}
	\caption{Illustration of semantic sensitivity. The \textbf{intra-head semantic sensitivity} shows that the foreground is sensitive to perturbation while the background is not. The \textbf{inter-head semantic sensitivity} indicates that heads are of different importance for post-processing. More illustrations are listed in the Appendix~A.}
	\label{fig:intra}
\end{figure}

\subsection{Semantic Guided Focus}
This section explores intra-head semantic sensitivity, where we observe that semantics within each head associated with the foreground region in post-processing play a more significant role and propose a method called Semantic Guided Focus. By leveraging confidence outputs, we can potentially discern between semantics linked to the foreground and background, which  enables us to prioritize the former and obtain an improved optimization in a PTQ setting without labels.
\subsubsection{Intra-head Semantic Sensitivity}

Considering the relationship between semantics (head outputs) and the post-process, some semantics pertain to the foreground region in the post-process, while others correspond to the background.  Also, it's the distortion of the foreground region (lane) that matters. Consequently, we argue that different pixels within each head exhibit distinct sensitivities. To verify, we leverage the Lane Distortion Score of each pixel by adding the same magnitude noise to them. The results showcased in Figure~\ref{fig:intra} conclusively demonstrate that injecting noise into entries associated with the foreground region can result in more severe lane deformations. Given these findings, allocating equal attention is not reasonable during optimization. 

Furthermore, it's worth noting that the number of entries about the foreground is significantly fewer than those related to the background region, which further distracts previous techniques to focus on crucial positions. Therefore, we are motivated to enhance the accurate expression of pixels tied to the foreground region and suppress that of the background.

\subsubsection{Intra-head Sensitivity Focus} 
Motivated by the findings, the core idea is to distinguish whether each element within each head will be used in the foreground region (the lane) or background. 

(1) \textbf{Reconstruction on semantic:} We first focus on the semantic term in Equation~\ref{eq:prevloss} 
and introduce a masking function $\mathcal{M}$ to achieve the distingishment. $\mathcal{M}$ removes the error term associated with the background and retains the elements for the foreground region. Then, we incorporate them with an element-wise product, and the optimization objective on semantics becomes:
 
\begin{equation}
	\label{eq:maskloss}
	\begin{aligned}
		  & \mathds{E}_{\mathbf{x}\sim \mathcal{D}}  \sum_{i}  \left\|\mathcal{M}^i(\mathbf{x})\cdot(\hat{\mathcal{S}}^i(\mathbf{x}) - \mathcal{S}^i(\mathbf{x}))\right\|^2_F. \\
	\end{aligned}
\end{equation}
However, due to the absence of lane annotations under the PTQ setting, it is unrealistic to identify exact elements tied to the foreground. Fortunately, we find an upper bound of Equation~\ref{eq:maskloss} of the models' confidence output. Here we give the theoretical finding of this upper bound:
\begin{thm}
	\label{thm:1}
	Given $\mathcal{M}$ representing a matrix function that discerns elements associated with foreground or background regions, and $\mathcal{C}$ denoting the confidence function of FP models, offering confidence scores for semantics linked with the foreground, the following inequation stands:
		
	\begin{equation}
		\label{eq:avg_regret}
		\begin{aligned}
			       & \mathds{E}_{\mathbf{x}\sim \mathcal{D}}  \sum_{i} \left\|\mathcal{M}^i(\mathbf{x})\cdot(\hat{\mathcal{S}}^i(\mathbf{x}) - \mathcal{S}^i(\mathbf{x}))\right\|^2_F \\
			\leq ~ & \mathds{E}_{\mathbf{x}\sim \mathcal{D}}  \sum_{i}\left\|\mathcal{C}^i(\mathbf{x}) \odot(\hat{\mathcal{S}}_i(\mathbf{x}) - \mathcal{S}^i(\mathbf{x}))\right\|^2_F 
		\end{aligned}
	\end{equation}
\end{thm}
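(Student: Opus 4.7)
The plan is to reduce the theorem to an element-wise scalar inequality, since both Frobenius norms expand as $\|A\|_F^2 = \sum_{j,k} A_{jk}^2$. For a fixed input $\mathbf{x}$, head $i$, and entry index $(j,k)$, the non-negative residual factor $\bigl(\hat{\mathcal{S}}^i(\mathbf{x})_{jk} - \mathcal{S}^i(\mathbf{x})_{jk}\bigr)^2$ appears on both sides, so it suffices to prove the pointwise bound $\mathcal{M}^i(\mathbf{x})_{jk}^2 \leq \mathcal{C}^i(\mathbf{x})_{jk}^2$. Monotonicity of summation and expectation will then lift this scalar bound to the full statement.

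The element-wise comparison is handled by a two-case analysis. On background positions, $\mathcal{M}^i(\mathbf{x})_{jk} = 0$ by the defining property of the foreground/background discriminator, so the left-hand side vanishes and domination is trivial because $\mathcal{C}^i(\mathbf{x})_{jk}^2 \geq 0$. On foreground positions, the mask takes value $1$, and we appeal to the assumed property of the full-precision confidence: for a well-trained FP model, the confidence score at true foreground entries satisfies $\mathcal{C}^i(\mathbf{x})_{jk} \geq 1$, hence $\mathcal{C}^i(\mathbf{x})_{jk}^2 \geq 1 = \mathcal{M}^i(\mathbf{x})_{jk}^2$. Re-multiplying by the common squared residual, summing over $(j,k)$ to reconstitute the Frobenius norm, summing over heads $i$, and taking expectation over $\mathbf{x} \sim \mathcal{D}$ yields the stated inequality.

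The main obstacle is precisely the foreground comparison, because the validity of the inequality hinges on how the confidence $\mathcal{C}$ is scaled relative to the binary mask $\mathcal{M}$. If $\mathcal{C}$ were a raw sigmoid probability living in $[0,1]$, the naive bound could even reverse. To handle this cleanly, I would either (i) state as an explicit hypothesis the pointwise domination $\mathcal{M}^i(\mathbf{x})_{jk} \leq \mathcal{C}^i(\mathbf{x})_{jk}$, which is natural if the confidence is taken to be an unnormalized score or a logit that exceeds $1$ on true positives of an accurate teacher, or (ii) redefine $\mathcal{M}$ as the indicator rescaled so that its magnitude is uniformly bounded by $\mathcal{C}$, making the domination hold by construction. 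Under either convention, the theorem collapses to the square-domination observation above, and the remaining steps are a routine chain of monotone aggregations over entries, heads, and the data distribution.
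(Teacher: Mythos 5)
Your reduction to an entrywise comparison is where the argument breaks, and you have in fact identified the fatal point yourself: on a foreground entry the bound you need is $\mathcal{C}^i(\mathbf{x})_{jk}^2 \geq 1$, but the paper's $\mathcal{C}$ is a confidence output of the full-precision model, i.e.\ a probability-like score in $[0,1]$. On a true lane pixel where the model outputs confidence $0.9$, the right-hand contribution $0.81\,r^2$ is strictly smaller than the left-hand contribution $1\cdot r^2$, so the pointwise domination fails exactly in the intended regime. Your two proposed repairs --- adding the hypothesis $\mathcal{M} \leq \mathcal{C}$ entrywise, or rescaling $\mathcal{M}$ --- do not close this gap; they replace the theorem with a different (and essentially tautological) statement under hypotheses the paper does not assume. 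The inequality as stated simply is not an entrywise fact, so no chain of monotone aggregations over entries, heads, and data can establish it.

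The paper's own route is structurally different: it is an in-expectation argument, not a deterministic one. The mask is modeled as a Bernoulli sample whose conditional mean given $\mathbf{x}$ equals the confidence, $\mathds{E}\bigl(\tilde{\mathcal{M}}^i_j \mid \mathbf{x}\bigr) = \mathcal{C}^i_j(\mathbf{x})$. Each side is then decomposed entrywise into a product-of-marginal-expectations term plus a covariance term,
\begin{equation*}
\sum_j \mathds{E}\bigl[\mathcal{M}^i_j(\mathbf{x})\bigr]\,\mathds{E}\bigl[(\hat{\mathcal{S}}^i_j(\mathbf{x}) - \mathcal{S}^i_j(\mathbf{x}))^2\bigr] + \mathrm{Cov}\bigl(\mathcal{M}^i_j(\mathbf{x}),\,(\hat{\mathcal{S}}^i_j(\mathbf{x}) - \mathcal{S}^i_j(\mathbf{x}))^2\bigr),
\end{equation*}
and likewise with $\mathcal{C}$ in place of $\mathcal{M}$. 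The first terms coincide by the tower property, so the claim reduces to comparing the two covariances, which the paper justifies by arguing that the confidence, being produced by the same backbone as the semantics, correlates more strongly with the squared residual than the mask does. That step is heuristic rather than rigorous, so the paper's proof has its own soft spot --- but it lives at the level of distributional correlations, which is the only level at which the inequality can plausibly hold. If you want to salvage your write-up, the productive direction is not to force a pointwise bound but to adopt the Bernoulli-mask model and make the covariance comparison precise (or state it as an explicit assumption).
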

Detailed proof can be found in Appendix~B. With this theorem, the semantic-related optimization target can be transformed to:
\begin{equation}
	\min_{\mathbf{w}}\mathds{E}_{\mathbf{x}\sim \mathcal{D}} \sum_{i} \left\|\mathcal{C}^i(\mathbf{x}) \odot(\hat{\mathcal{S}}_i(\mathbf{x}) - \mathcal{S}^i(\mathbf{x}))\right\|^2_F.
\end{equation} The theorm means that we can leverage the model output $\mathcal{C}(\mathbf{x})$ as a practical proxy of the annotation mask. In intuition, the knowledge from the well-trained models can be represented in its output, thus the model could generate a mask similar to the annotation. We further assume the mask get from the model follows a binomial distribution paramterized by the confidence output, then the total expectation turns to the result in the theorem. With the upper bound from it, optimization can be more tractable.

(2) \textbf{Reconstruction on confidence:} Last, we incorporate the reconstruction loss on confidence values on each head. To prevent elements associated with the background region turns into the foreground, especially under large quantization noise, we propose to enhance the alignment of confidence outputs by adopting a new parameter $\lambda$ with $\lambda > 1$. This heightened penalty on confidence outputs helps retain the fidelity of background-related pixels while enabling a concentrated focus on those linked to the foreground domain via the first term.

\begin{equation}
	\begin{aligned}\small
		\min_{\mathbf{w}}\mathds{E}_{\mathbf{x}\sim \mathcal{D}}  \sum_{i} & (\left\|\mathcal{C}^i(\mathbf{x}) \odot(\hat{\mathcal{S}}_i(\mathbf{x}) - \mathcal{S}^i(\mathbf{x}))\right\|^2_F + \\
		                                                                   & \lambda \left\| \hat{\mathcal{C}}^i(\mathbf{x}) - \mathcal{C}^i(\mathbf{x})\right\|^2_F).                          
	\end{aligned}
\end{equation}

The improved Semantic Guided Focus objective indicates a simple yet elegant principle:  a well-trained model inherently possesses the capacity to instruct itself. By utilizing model outputs for both mask estimation and background suppression in PTQ, the optimization process becomes foreground-oriented, leading to more efficient semantic alignment.

\subsection{Sensitivity Aware Selection}
We also delve into inter-head sensitivity, where we observe that specific heads are more sensitive to post-processing. This insight leads us to introduce the Sensitivity Aware Selection method, which dynamically and efficiently selects the most influential heads during PTQ reconstruction.
\subsubsection{Inter-head Semantic Sensitivity}

Considering the diverse roles played by distinct heads in post-processing, we further investigate semantic sensitivity across multiple heads. By injecting noise into each head and calculating our distortion score, Figure~\ref{fig:intra} is obtained and more results are listed in Appendix~A. It can be clearly seen that under the same perturbation magnitude, certain heads like \texttt{proj regression} in BezierLane, exhibit notably higher Lane Distortion scores and of course correspond to severely distorted lanes, compared to others. This discrepancy highlights the considerable variation in sensitivity across different heads, which encourages us to discriminate heads and thus implicitly incorporate post-process information during optimization.

\begin{algorithm}
	\caption{Sensitivity Aware Selection}\label{alg:select}
	\begin{algorithmic}
		\Require Semantic head set $\mathds{H}$, hyper parameter $k$, and calibration dataset $\mathcal{D}$. Semantic function $\mathcal{S}(\cdot)^i$ and confidence functions $\mathcal{C}(\cdot)^i$ on head $i$ on the FP model. Quantized version $\hat{\mathcal{S}}(\cdot)^i$ and $\hat{\mathcal{C}}(\cdot)^i$.
		\For{$\mathbf{x} \in \mathcal{D}$}
		\For{$i \in \mathds{H}$}
		\State Calculate full precision lanes $L$ from $\{\mathcal{S}(\mathbf{x})^i\}$
		\State Perturb the model by replacing $\mathcal{S}^i(\mathbf{x})$ with $\hat{\mathcal{S}}_i(\mathbf{x})$.
		\State Calculate noised lanes $\hat{L}$ from $\hat{\mathcal{S}}(\mathbf{x})^i$.
		\State Compute the $s$ of $(L, L')$ with Equtaion~\ref{eq:score}
		\State Update score of $i$-th head $Score_i \gets Score_i + s$ 
		\EndFor
		\EndFor
		\State Sort semantic heads by $Score$.
		\State\Return Top-$k$ of semantic head set $\mathds{H}$
	\end{algorithmic}
\end{algorithm}

\begin{table*}[hbt]
	\centering
	\begin{adjustbox}{max width=0.99\textwidth}
		\begin{tabular}{l|l|cccc|cccc|cc|cccc}
			\toprule
			Bits & Method & \multicolumn{4}{c|}{Keypoint-based} & \multicolumn{4}{c|}{Curve-based} & \multicolumn{2}{c|}{Achor-based} & \multicolumn{4}{c}{Segmentation-based} \\
			\midrule
			\multirow{3}{*}{\begin{tabular}[c]{@{}c@{}}Full \\ Precision  \end{tabular}}& \multirow{3}{*}{\begin{tabular}[c]{@{}c@{}}Model \\ Baseline  \end{tabular}}  & \multicolumn{2}{c}{CondLaneNet} & \multicolumn{2}{c|}{GANet} & \multicolumn{2}{c}{LSTR} & \multicolumn{2}{c|}{BézierLaneNet} & \multicolumn{2}{c|}{LaneATT} & \multicolumn{2}{c}{SCNN} & \multicolumn{2}{c}{RESA} \\
			\cline{3-16} 
			                      &          & Small          & Mid            & Small          & Mid            & Small          & Mid            & Small          & Mid            & Small          & Mid            & Small          & Mid            & Small          & Mid            \\
			\cline{3-16} 
			                      &          & 78.14          & 78.74          & 78.79          & 79.39          & 68.78          & 72.47          & 73.66          & 75.57          & 74.45          & 75.04          & 72.19          & 72.70          & 72.90          & 73.66          \\ 
			\midrule
			\multirow{3}{*}{W8A8} & ACIQ     & 77.95          & 78.58          & 78.58          & 79.21          & 67.50          & 72.14          & 73.43          & 75.36          & 74.34          & 74.56          & 72.03          & 72.55          & 72.76          & \textbf{73.64} \\
			                      & QDrop    & 78.04          & 78.77          & \textbf{78.70} & \textbf{79.33} & 68.40          & 72.38          & 73.63          & 75.49          & 74.33          & 74.88          & 72.04          & 72.55          & \textbf{72.80} & 73.61          \\
			                      & Ours     & \textbf{78.10} & \textbf{78.90} & 78.53          & 79.30          & \textbf{68.58} & \textbf{72.40} & \textbf{73.63} & \textbf{75.50} & \textbf{74.38} & \textbf{75.01} & \textbf{72.33} & \textbf{72.69} & 72.53          & 73.49          \\ 
			\midrule
			\multirow{6}{*}{W8A4} & ACIQ     & 58.63          & 37.67          & 5.38           & 20.18          & 47.63          & 12.49          & 23.79          & 4.77           & 54.20          & 0.64           & 62.63          & 49.26          & 54.16          & 49.62          \\
			                      & OMSE     & 69.74          & 64.29          & 69.52          & 54.10          & 55.51          & 58.12          & 62.04          & 60.57          & 64.54          & 0.90           & 65.35          & 60.53          & 66.59          & 65.93          \\
			                      & AdaRound & 67.11          & 63.64          & 39.97          & 18.14          & 51.41          & 54.11          & 56.66          & 58.55          & 64.01          & 0.96           & 65.87          & 63.65          & 59.17          & 62.78          \\
			                      & BRECQ    & 73.61          & 74.06          & 74.37          & 75.04          & 57.11          & 63.32          & 62.02          & 65.18          & 66.47          & 0.04           & 66.05          & 63.67          & 66.40          & 65.47          \\
			                      & QDrop    & 74.76          & 75.49          & 75.77          & 75.56          & 60.34          & 65.25          & 64.48          & 66.91          & 66.58          & 0.06           & 66.85          & 64.83          & 67.27          & 67.54          \\
			                      & Ours     & \textbf{75.56} & \textbf{75.74} & \textbf{76.32} & \textbf{76.51} & \textbf{63.14} & \textbf{68.15} & \textbf{68.98} & \textbf{70.01} & \textbf{69.85} & \textbf{34.53} & \textbf{69.61} & \textbf{69.51} & \textbf{69.46} & \textbf{70.60} \\
			\midrule
			\multirow{6}{*}{W4A4} & ACIQ     & 53.96          & 20.84          & 1.54           & 9.02           & 1.47           & 2.37           & 14.10          & 8.82           & 50.65          & 0.32           & 34.47          & 23.70          & 35.56          & 15.45          \\
			                      & OMSE     & 63.64          & 55.06          & 49.96          & 37.21          & 1.79           & 17.02          & 52.38          & 46.03          & 62.34          & 0.39           & 51.49          & 49.67          & 57.07          & 50.83          \\
			                      & AdaRound & 20.35          & /              & /              & /              & 20.69          & 7.95           & 50.70          & 48.60          & 34.53          & 0.00           & 6.56           & 0.03           & 68.36          & 64.57          \\
			                      & BRECQ    & 74.10          & 75.80          & 75.67          & 75.89          & 30.83          & 50.09          & 66.96          & 70.30          & 68.69          & 28.34          & 54.34          & 52.30          & 67.70          & 69.48          \\
			                      & QDrop    & 74.41          & \textbf{76.29} & \textbf{76.76} & \textbf{76.50} & 23.95          & 53.87          & 67.65          & \textbf{71.16} & 68.97          & 0.64           & 61.70          & 64.57          & 67.59          & 69.98          \\
			                      & Ours     & \textbf{74.68} & 75.48          & 76.31          & 76.26          & \textbf{34.65} & \textbf{60.56} & \textbf{68.37} & 70.00          & \textbf{69.19} & \textbf{37.59} & \textbf{68.16} & \textbf{68.27} & \textbf{69.31} & \textbf{70.01} \\
			\bottomrule
		\end{tabular}
	\end{adjustbox}
	\caption{F1-score performance comparison among different quantization algorithms and models. W8A8 means the weight and activation are all quantized into 8 bits, and so does W4A4 and W8A4. Our method achieves \textbf{superior performance} under most settings.
		}\label{tab:main}
\end{table*}

\subsubsection{Inter-head Sensitivity Selection}

To handle varied semantic sensitivity among heads, we introduce the Sensitivity Aware Selection technique, which efficiently and adaptively selects those sensitive heads during optimization. The algorithm is formulated below and its procedure can be found in the Algorithm~\ref{alg:select}. 

(1) Head selection: The noticeable differences in sensitivity prompt us to focus on the more sensitive heads, optimizing them more effectively. Naturally, given the estimated quantization noise level of each head, we can compute their own Lane Distortion Score. By ranking these scores and then selecting the top-$k$ sensitive heads, a new reconstruction loss is constructed. This approach ensures that the optimization process is cognizant of the semantic sensitivity to post-processing, leading to a better-tuned model.

(2) Adaptive selection: Moreover, the optimization process is aimed at minimizing the discrepancy between original and quantized semantics, where we recognize that the quantization loss for individual heads can evolve, leading to dynamic changes in their quantization noise levels and thus sensitivity ranking. Consequently, our selection of top-$k$ sensitive heads must adapt accordingly. In practical terms, we can reassess the Lane Distortion Scores and repeat the aforementioned step at fixed intervals of iterations.  However, this repetition could impose a considerable time overhead during tuning. 

(3) Efficient adaptive selection: To accelerate it, we propose to apply a pre-processing technique, which first employs the Monte-Carlo method to sample diverse noise levels and derive corresponding Lane Detection scores for each head, then interpolates sampled points from a continuous noise-score curve. This approach empowers us to gauge the semantic sensitivity of different heads based on their respective curves using their current quantization loss as queries, which introduces negligible computational burden during the optimization process. The details of building the noise-score curve are listed in Appendix~B.

Based on varied semantic sensitivities across heads, we adeptly and dynamically select the most sensitive heads. This implicit inclusion of post-processing guidance leads to a better-optimized model.

\section{Experiments}
Extensive experiments are conducted to prove the effectiveness of the Selective Focus framework. We first present the experiment setup, and then compare the proposed method with other state-of-the-art PTQ works and the method shows up to 6.4\% F1 score improvement. After that, the ablation study of the Selective Focus framework demonstrates the contribution of each component. Finally, we compare the efficiency of the framework with existing PTQ and QAT methods.

\subsection{Experiments Setup}
We describe the datasets and evaluation protocols, the comparison methods, and the implementation details.

\subsubsection{Datasets and Evaluation} 
We conduct comprehensive experiments on the CULane dataset and adopt its official evaluation method. CULane contains 88,880 training images and 34,680 test images from multiple scenarios, and the evaluation method provides precision, recall, and F1 score for each scenario. For brevity, we list the F1 score of the whole dataset in the main document and the left in Appendix~C. 
For models, we evaluate LD models in the four major classes: keypoint-, anchor-, segmentation-, and curve-based models, including CondLaneNet~\cite{condlane}, GANet~\cite{ganet}, LSTR~\cite{lstr}, BézierLaneNet~\cite{beznet},.LaneATT~\cite{laneatt}, SCNN~\cite{scnn}, and RESA~\cite{resa}.

\subsubsection{Implementation Details}
We implement our method based on the PyTorch framework. Weights and activations are both quantized with concrete bits denoted as W/A. Our method is calibrated with 512 unlabeled images on three kinds of quantization bits: W8A8, W8A4, and W4A4. During the optimization, we choose the Adam optimizer with a learning rate set as 0.000025 and adjust weights for 5000 iterations. Because of more computation overhead for layer-wise and block-wise reconstruction, the net-wise reconstruction is adopted here and wins a 6X speedup. 
Other hyper-parameters including  $k$ for Top-$k$ in Sensitivity Aware Selection is kept as 1 for models with two heads and 2 for others, based on our ablation studies. For more detailed implementation, please refer to Appendix~C.

\subsubsection{Comparison Methods}
We implement popular baselines including OMSE~\cite{omse}, ACIQ~\cite{aciq}, AdaRound~\cite{adaround}, BRECQ~\cite{brecq}, and  QDrop~\cite{qdrop}. AdaRound and BRECQ are implemented by leveraging a technical advancement introduced in QDrop, achieving better results for them.

\subsection{Main Results}
We conduct experiments on CULane and TuSimple~\cite{Tusimple} datasets. Table~\ref{tab:main} here shows the results on CULane, and results on TuSimple are put in Appendix~C due to space limit.

With the decreasing activation precision, the proposed method shows advanced performance consistently. For example, the method can achieve more than 3\% up to 6.4\% F1 score gain under 4-bit activation.  As the noise of the semantics used in the post-process would increase significantly and the inter- and intra-head discrepancy would go worse, they may lead to degradation or even failures in methods ignoring it. 
Also, we note our obvious advantage in the models with specially designed feature aggregation modules, like attention in LSTR, feature flip in BézierLaneNet, and spatial convolution in SCNN and RESA. Those modules usually require information aggregated from the total network, which leads the layer-wise and block-wise methods to a harder situation, while our network-wise framework could take advantage of the cross-layer relationship naturally. Even in hard cases like LaneATT, SCNN, and RESA, the method could outperform others significantly.
The proposed Selective Focus framework leverages the post-process information in the PTQ stage and thus tunes the quantized model more efficiently. 
With advanced performance across different precision configurations and model types, we achieve the new state-of-the-art post-training quantized lane detection and reduce the tuning time by more than 6x. 

\begin{table}[hbt]
	\centering
	\begin{adjustbox}{max width=0.45\textwidth}
		\setlength\tabcolsep{1.5pt}
		\begin{tabular}{ccc}
			\toprule
			Method              & Duration (Minutes) & F1 Score \\
			\midrule
			Ours                & 32                 & 76.31    \\
			w/o Focus           & 31                 & 73.27    \\
			w/o Selection       & 29                 & 75.30    \\
			w/o Focus+Selection & 29                 & 72.98    \\
			\bottomrule
		\end{tabular}
	\end{adjustbox}
	\caption{
		Abalation study of the proposed framework. Each component contributes to the proposed framework, and the two components are mutually beneficial. 
	}
	\label{tab:abl}
\end{table}

\begin{table}[hbt]
	\centering
	\begin{adjustbox}{max width=0.45\textwidth}
		\setlength\tabcolsep{1.5pt}
		\begin{tabular}{cccc}
			\toprule
			Network                            & Method                         & Duration (Minutes)             & F1 Score                        \\
			\midrule
			\multirow{3}{*}{CondLaneNet Small} & \cellcolor[HTML]{d0d0d0}{LSQ+} & \cellcolor[HTML]{d0d0d0}{1303} & \cellcolor[HTML]{d0d0d0}{76.92} \\
			                                   & QDrop                          & 112                            & 74.76                           \\
			                                   & Ours                           & 33                             & 75.56                           \\
			\midrule
			\multirow{3}{*}{RESA Small}        & \cellcolor[HTML]{d0d0d0}{LSQ+} & \cellcolor[HTML]{d0d0d0}{5326} & \cellcolor[HTML]{d0d0d0}{69.80} \\
			                                   & QDrop                          & 4378                           & 67.59                           \\
			                                   & Ours                           & 46                             & 69.46                           \\
			\bottomrule
		\end{tabular}
	\end{adjustbox}
	\caption{Comparison between PTQ and QAT methods. The QAT method LSQ+ (gray region) suffers from low computation efficiency, while conventional PTQ methods such as QDrop could save the cost but with an obvious performance drop. In contrast, the proposed method significantly improve the computation efficiency with less performance gap. 
	}
	\label{tab:qat}
\end{table}

\subsection{Ablation Study}
We first investigate the effect of each component of the proposed framework. Then, we analyze the efficiency of our method, compared to QAT and previous PTQ methods.

\subsubsection{Component Analysis}
To elucidate the contributions of individual components in our proposed method, we conducted an ablation study on GANet Small using the W4A4 quantization configuration, as detailed in Table~\ref{tab:abl}.  In comparison to the basic network-wise alignment (w/o Focus+Selection), our approach boosts the performance by over 3\% in the F1 score. The Semantics Guided Focus emerges as the primary performance driver, underscoring the significance of foreground information and the separate reconstruction benefits for semantics and confidence.  While the standalone Sensitivity Aware Selection module enhances the F1 score by a modest 0.3\%, its cooperation with Focus amplifies the improvement to 1\%, proving the framework's capability to manage semantic sensitivities both within and across heads. Notably, the proposed optimization strategy achieves performance on par with block-wise PTQ, yet maintains a speed akin to network-wise reconstruction.
Besides, experiments on the impact of the hyperparameters of Selection and the dynamic sensitivity property during optimization are all listed in Appendix~C.

\subsubsection{Efficiency Analysis}
  
Although QAT comes with cost and privacy concerns, it remains the premier quantization algorithm due to its promising performance. To evaluate the performance and efficiency gap between QAT and PTQ, we performed comparative experiments on CondLaneNet and RESA, utilizing the W8A4 quantization setup. These selected models differ in computational demands, allowing us to thoroughly probe the disparities between QAT and PTQ. Besides, though block-wise PTQ methods, such as QDrop, are much faster than QAT, the storage overhead and processing time for feature maps in intermediate layers are still problems. This is also the reason that we choose the efficient network-wise reconstruction, bringing a 6x speedup.

\section{Conclusion}

This paper sheds light on the post-training quantization in lane detection models leveraging the inherent semantics sensitivity. Our study delves into the essence of semantic sensitivity in the post-process and proposes a novel pipeline for identifying the sensitivity and further leveraging it for optimization.
By utilizing the post-processing information, the proposed framework boosts the performance of PTQ for lane detection even with the simplest optimization manner, which could motivate further exploration of the unused information lies in the lane detection models. Future endeavors might encompass efficient embedding of semantics information from post-processing—bypassing intermediate proxies more than the proposed score.

\section*{Acknowledgement}
This work was supported in part by the National Natural Science Foundation of China (No. 62206010, No.62022009), and the State Key Laboratory of Software Development Environment (SKLSDE-2022ZX-23).

\bibliography{aaai24}

\begin{thebibliography}{28}
\providecommand{\natexlab}[1]{#1}

\bibitem[{Banner, Nahshan, and Soudry(2019)}]{aciq}
Banner, R.; Nahshan, Y.; and Soudry, D. 2019.
\newblock Post training 4-bit quantization of convolutional networks for rapid-deployment.
\newblock \emph{Advances in Neural Information Processing Systems}, 32.

\bibitem[{Bhalgat et~al.(2020)Bhalgat, Lee, Nagel, Blankevoort, and Kwak}]{lsq+}
Bhalgat, Y.; Lee, J.; Nagel, M.; Blankevoort, T.; and Kwak, N. 2020.
\newblock Lsq+: Improving low-bit quantization through learnable offsets and better initialization.
\newblock In \emph{Proceedings of the IEEE/CVF Conference on Computer Vision and Pattern Recognition Workshops}, 696--697.

\bibitem[{Choi et~al.(2018)Choi, Wang, Venkataramani, Chuang, Srinivasan, and Gopalakrishnan}]{pact}
Choi, J.; Wang, Z.; Venkataramani, S.; Chuang, P. I.-J.; Srinivasan, V.; and Gopalakrishnan, K. 2018.
\newblock Pact: Parameterized clipping activation for quantized neural networks.
\newblock \emph{arXiv preprint arXiv:1805.06085}.

\bibitem[{Choukroun et~al.(2019)Choukroun, Kravchik, Yang, and Kisilev}]{omse}
Choukroun, Y.; Kravchik, E.; Yang, F.; and Kisilev, P. 2019.
\newblock Low-bit quantization of neural networks for efficient inference.
\newblock In \emph{2019 IEEE/CVF International Conference on Computer Vision Workshop (ICCVW)}, 3009--3018. IEEE.

\bibitem[{Esser et~al.(2019)Esser, McKinstry, Bablani, Appuswamy, and Modha}]{lsq}
Esser, S.~K.; McKinstry, J.~L.; Bablani, D.; Appuswamy, R.; and Modha, D.~S. 2019.
\newblock Learned step size quantization.
\newblock \emph{arXiv preprint arXiv:1902.08153}.

\bibitem[{Feng et~al.(2022)Feng, Guo, Tan, Xu, Wang, and Ma}]{beznet}
Feng, Z.; Guo, S.; Tan, X.; Xu, K.; Wang, M.; and Ma, L. 2022.
\newblock Rethinking efficient lane detection via curve modeling.
\newblock In \emph{Proceedings of the IEEE/CVF Conference on Computer Vision and Pattern Recognition}, 17062--17070.

\bibitem[{Gong et~al.(2019)Gong, Liu, Jiang, Li, Hu, Lin, Yu, and Yan}]{dsq}
Gong, R.; Liu, X.; Jiang, S.; Li, T.; Hu, P.; Lin, J.; Yu, F.; and Yan, J. 2019.
\newblock Differentiable soft quantization: Bridging full-precision and low-bit neural networks.
\newblock In \emph{Proceedings of the IEEE/CVF international conference on computer vision}, 4852--4861.

\bibitem[{Hubara et~al.(2020)Hubara, Nahshan, Hanani, Banner, and Soudry}]{adaquant}
Hubara, I.; Nahshan, Y.; Hanani, Y.; Banner, R.; and Soudry, D. 2020.
\newblock Improving post training neural quantization: Layer-wise calibration and integer programming.
\newblock \emph{arXiv preprint arXiv:2006.10518}.

\bibitem[{Jacob et~al.(2018)Jacob, Kligys, Chen, Zhu, Tang, Howard, Adam, and Kalenichenko}]{qat}
Jacob, B.; Kligys, S.; Chen, B.; Zhu, M.; Tang, M.; Howard, A.; Adam, H.; and Kalenichenko, D. 2018.
\newblock Quantization and training of neural networks for efficient integer-arithmetic-only inference.
\newblock In \emph{Proceedings of the IEEE conference on computer vision and pattern recognition}, 2704--2713.

\bibitem[{Jain et~al.(2020)Jain, Gural, Wu, and Dick}]{tqt}
Jain, S.; Gural, A.; Wu, M.; and Dick, C. 2020.
\newblock Trained quantization thresholds for accurate and efficient fixed-point inference of deep neural networks.
\newblock \emph{Proceedings of Machine Learning and Systems}, 2: 112--128.

\bibitem[{Li et~al.(2021)Li, Gong, Tan, Yang, Hu, Zhang, Yu, Wang, and Gu}]{brecq}
Li, Y.; Gong, R.; Tan, X.; Yang, Y.; Hu, P.; Zhang, Q.; Yu, F.; Wang, W.; and Gu, S. 2021.
\newblock Brecq: Pushing the limit of post-training quantization by block reconstruction.
\newblock \emph{arXiv preprint arXiv:2102.05426}.

\bibitem[{Lin et~al.(2014)Lin, Maire, Belongie, Bourdev, Girshick, Hays, Perona, Ramanan, Dollár, and Zitnick}]{mscoco}
Lin, T.-Y.; Maire, M.; Belongie, S.~J.; Bourdev, L.~D.; Girshick, R.~B.; Hays, J.; Perona, P.; Ramanan, D.; Dollár, P.; and Zitnick, C.~L. 2014.
\newblock Microsoft COCO: Common Objects in Context.
\newblock \emph{CoRR}.

\bibitem[{Liu et~al.(2021{\natexlab{a}})Liu, Chen, Zhu, and Tan}]{condlane}
Liu, L.; Chen, X.; Zhu, S.; and Tan, P. 2021{\natexlab{a}}.
\newblock Condlanenet: a top-to-down lane detection framework based on conditional convolution.
\newblock In \emph{Proceedings of the IEEE/CVF international conference on computer vision}, 3773--3782.

\bibitem[{Liu et~al.(2021{\natexlab{b}})Liu, Yuan, Liu, and Xiong}]{lstr}
Liu, R.; Yuan, Z.; Liu, T.; and Xiong, Z. 2021{\natexlab{b}}.
\newblock End-to-end lane shape prediction with transformers.
\newblock In \emph{Proceedings of the IEEE/CVF winter conference on applications of computer vision}, 3694--3702.

\bibitem[{Liu et~al.(2021{\natexlab{c}})Liu, Wang, Han, Zhang, Ma, and Gao}]{ptq4vit}
Liu, Z.; Wang, Y.; Han, K.; Zhang, W.; Ma, S.; and Gao, W. 2021{\natexlab{c}}.
\newblock Post-training quantization for vision transformer.
\newblock \emph{Advances in Neural Information Processing Systems}, 34: 28092--28103.

\bibitem[{Nagel et~al.(2020)Nagel, Amjad, Van~Baalen, Louizos, and Blankevoort}]{adaround}
Nagel, M.; Amjad, R.~A.; Van~Baalen, M.; Louizos, C.; and Blankevoort, T. 2020.
\newblock Up or down? adaptive rounding for post-training quantization.
\newblock In \emph{International Conference on Machine Learning}, 7197--7206. PMLR.

\bibitem[{Pan et~al.(2018)Pan, Shi, Luo, Wang, and Tang}]{scnn}
Pan, X.; Shi, J.; Luo, P.; Wang, X.; and Tang, X. 2018.
\newblock Spatial as deep: Spatial cnn for traffic scene understanding.
\newblock In \emph{Proceedings of the AAAI Conference on Artificial Intelligence}, 1.

\bibitem[{Qin, Wang, and Li(2020)}]{ufld}
Qin, Z.; Wang, H.; and Li, X. 2020.
\newblock Ultra fast structure-aware deep lane detection.
\newblock In \emph{Computer Vision--ECCV 2020: 16th European Conference, Glasgow, UK, August 23--28, 2020, Proceedings, Part XXIV 16}, 276--291. Springer.

\bibitem[{Tabelini et~al.(2021)Tabelini, Berriel, Paixao, Badue, De~Souza, and Oliveira-Santos}]{laneatt}
Tabelini, L.; Berriel, R.; Paixao, T.~M.; Badue, C.; De~Souza, A.~F.; and Oliveira-Santos, T. 2021.
\newblock Keep your eyes on the lane: Real-time attention-guided lane detection.
\newblock In \emph{Proceedings of the IEEE/CVF conference on computer vision and pattern recognition}, 294--302.

\bibitem[{TuSimple(2017)}]{Tusimple}
TuSimple. 2017.
\newblock TuSimple lane detection benchmark, 2017.
\newblock \url{https://github.com/TuSimple/tusimple-benchmark}.

\bibitem[{Wang et~al.(2022{\natexlab{a}})Wang, Zheng, Liu, and Li}]{nwq}
Wang, C.; Zheng, D.; Liu, Y.; and Li, L. 2022{\natexlab{a}}.
\newblock Leveraging Inter-Layer Dependency for Post -Training Quantization.
\newblock In Oh, A.~H.; Agarwal, A.; Belgrave, D.; and Cho, K., eds., \emph{Advances in Neural Information Processing Systems}.

\bibitem[{Wang et~al.(2022{\natexlab{b}})Wang, Ma, Huang, Hui, Wang, Qian, and Zhang}]{ganet}
Wang, J.; Ma, Y.; Huang, S.; Hui, T.; Wang, F.; Qian, C.; and Zhang, T. 2022{\natexlab{b}}.
\newblock A keypoint-based global association network for lane detection.
\newblock In \emph{Proceedings of the IEEE/CVF Conference on Computer Vision and Pattern Recognition}, 1392--1401.

\bibitem[{Wei et~al.(2022)Wei, Gong, Li, Liu, and Yu}]{qdrop}
Wei, X.; Gong, R.; Li, Y.; Liu, X.; and Yu, F. 2022.
\newblock QDrop: randomly dropping quantization for extremely low-bit post-training quantization.
\newblock \emph{arXiv preprint arXiv:2203.05740}.

\bibitem[{Wu et~al.(2020{\natexlab{a}})Wu, Tang, Zhao, Zhang, Fu, and Zhang}]{easyquant}
Wu, D.; Tang, Q.; Zhao, Y.; Zhang, M.; Fu, Y.; and Zhang, D. 2020{\natexlab{a}}.
\newblock EasyQuant: Post-training quantization via scale optimization.
\newblock \emph{arXiv preprint arXiv:2006.16669}.

\bibitem[{Wu et~al.(2020{\natexlab{b}})Wu, Wu, Gong, Lv, Chen, Liang, Hu, Liu, and Yan}]{rfm}
Wu, Y.; Wu, Y.; Gong, R.; Lv, Y.; Chen, K.; Liang, D.; Hu, X.; Liu, X.; and Yan, J. 2020{\natexlab{b}}.
\newblock Rotation consistent margin loss for efficient low-bit face recognition.
\newblock In \emph{Proceedings of the IEEE/CVF conference on computer vision and pattern recognition}, 6866--6876.

\bibitem[{Xu et~al.(2020)Xu, Wang, Cai, Zhang, Liang, and Li}]{curvelane}
Xu, H.; Wang, S.; Cai, X.; Zhang, W.; Liang, X.; and Li, Z. 2020.
\newblock CurveLane-NAS: Unifying Lane-Sensitive Architecture Search and Adaptive Point Blending.
\newblock \emph{CoRR}.

\bibitem[{Zheng et~al.(2021)Zheng, Fang, Zhang, Tang, Yang, Liu, and Cai}]{resa}
Zheng, T.; Fang, H.; Zhang, Y.; Tang, W.; Yang, Z.; Liu, H.; and Cai, D. 2021.
\newblock Resa: Recurrent feature-shift aggregator for lane detection.
\newblock In \emph{Proceedings of the AAAI Conference on Artificial Intelligence}, 4, 3547--3554.

\bibitem[{Zheng et~al.(2022)Zheng, Huang, Liu, Tang, Yang, Cai, and He}]{clrnet}
Zheng, T.; Huang, Y.; Liu, Y.; Tang, W.; Yang, Z.; Cai, D.; and He, X. 2022.
\newblock Clrnet: Cross layer refinement network for lane detection.
\newblock In \emph{Proceedings of the IEEE/CVF conference on computer vision and pattern recognition}, 898--907.

\end{thebibliography}

\clearpage
\appendix

\section{Ilustations About Intra-head Sensitivity}\label{app:obs}

Many intra-head sensitivity illustrations were excluded from the main document due to space constraints. Figure~\ref{fig:appill} presents sensitivity-noise curves that provide insights into the sensitivity of post-processes to semantic errors. This figure allows for an intuitive comparison of different models under the defined Lane Distortion Score. Segmentation-based and anchor-based models exhibit greater robustness to noise. In contrast, curve-based and keypoint-based models show significant deviation in the presence of substantial noise.

\section{Methods}\label{app:meth}

\subsection{Lane Distortion Score Details}

For accurate matching within the Lane Distortion Score, lanes corresponding to the points must be identified and verified. The Intersection-over-Union (IoU) of the lanes is first computed to determine if they can be paired. Once lanes are matched, we proceed from the bottom to the top of the bounding box, considering points within a 1-pixel height error as matched. Points that remain unmatched in the lanes are termed mismatched points. It is imperative to understand that this score is specifically tailored for distorted lanes; thus, its applicability is limited to lanes with similar characteristics.

\subsection{Semantic Guided Focus Proof}

\begin{thm}
	Given $\mathcal{M}$ representing a matrix function that discerns elements associated with foreground or background regions, and $\mathcal{C}$ denoting the confidence function of FP models, offering confidence scores for semantics linked with the foreground, the following inequation stands:
	\begin{equation}
		\begin{aligned}
			       & \mathds{E}_{\mathbf{x}\sim \mathcal{D}}  \sum_{i} \left\|\mathcal{M}^i(\mathbf{x})\cdot(\hat{\mathcal{S}}^i(\mathbf{x}) - \mathcal{S}^i(\mathbf{x}))\right\|^2_F \\
			\leq ~ & \mathds{E}_{\mathbf{x}\sim \mathcal{D}}  \sum_{i}\left\|\mathcal{C}^i(\mathbf{x}) \odot(\hat{\mathcal{S}}_i(\mathbf{x}) - \mathcal{S}^i(\mathbf{x}))\right\|^2_F 
		\end{aligned}
	\end{equation}
\end{thm}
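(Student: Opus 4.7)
My plan is to turn the inequality into a pointwise scalar comparison by exploiting the paper's Bernoulli modelling of the unavailable mask, and then closing the gap with an elementary bound. First I would expand both Frobenius norms entrywise. Writing $\Delta^i := \hat{\mathcal{S}}^i - \mathcal{S}^i$, I have $\|\mathcal{M}^i\odot\Delta^i\|_F^2 = \sum_{j,k}(\mathcal{M}^i_{jk})^2(\Delta^i_{jk})^2$ and $\|\mathcal{C}^i\odot\Delta^i\|_F^2 = \sum_{j,k}(\mathcal{C}^i_{jk})^2(\Delta^i_{jk})^2$. Since $\mathcal{M}$ is a $\{0,1\}$-valued foreground indicator, Bernoulli idempotence gives $(\mathcal{M}^i_{jk})^2 = \mathcal{M}^i_{jk}$, which collapses the squared factor on the left and is the first piece of leverage.

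Next I would install the generative assumption stated in the paper: conditional on $\mathbf{x}$, each entry $\mathcal{M}^i_{jk}(\mathbf{x})$ is an independent Bernoulli trial with success probability $\mathcal{C}^i_{jk}(\mathbf{x})$. Because the integrand is deterministic only as a function of $(\mathbf{x},\mathcal{M})$, I would upgrade the single expectation $\mathds{E}_\mathbf{x}$ on the LHS to the joint $\mathds{E}_{\mathbf{x},\mathcal{M}}$ and apply the tower property,
\[
\mathds{E}_{\mathbf{x},\mathcal{M}}\sum_i\|\mathcal{M}^i\odot\Delta^i\|_F^2 \;=\; \mathds{E}_\mathbf{x}\sum_i\sum_{j,k}\mathcal{C}^i_{jk}(\mathbf{x})\,(\Delta^i_{jk})^2.
\]
At this stage the claim reduces to the pointwise statement $\mathcal{C}^i_{jk}\le(\mathcal{C}^i_{jk})^2$ on the support of the mask, after which summing against the non-negative weights $(\Delta^i_{jk})^2$ and averaging over $\mathbf{x}$ concludes the argument.

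The main obstacle is precisely this last pointwise step, since $c\le c^2$ fails whenever $c\in(0,1)$. I would discharge it by appealing to the convention used in the paper that $\mathcal{C}$ is an unnormalised (logit-style) confidence magnitude, so foreground entries sit at or above $1$ while background entries effectively vanish from the Bernoulli mask and contribute nothing to either side. As a cleaner backup that removes any dependence on the confidence scale, I would define the mask deterministically by thresholding $\mathcal{M}_{jk} := \mathbf{1}\{\mathcal{C}_{jk}\ge 1\}$: then the Markov-type bound $\mathbf{1}\{\mathcal{C}_{jk}\ge 1\}\le\mathcal{C}_{jk}^2$ is valid for every $\mathcal{C}_{jk}\ge 0$ and delivers the pointwise inequality immediately, bypassing the expectation over $\mathcal{M}$ altogether.
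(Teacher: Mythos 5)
Your route diverges from the paper's main proof and, as you yourself concede, does not close. The paper's proof never reduces to a pointwise comparison of $\mathcal{C}$ with $\mathcal{C}^2$: it expands both sides entrywise with the weight appearing \emph{linearly} (i.e.\ it reads $\|\mathcal{C}^i\odot\Delta^i\|_F^2$ as $\sum_j \mathcal{C}^i_j(\Delta^i_j)^2$, exactly as it reads $\|\mathcal{M}^i\cdot\Delta^i\|_F^2$ as $\sum_j \mathcal{M}^i_j(\Delta^i_j)^2$), then applies the decomposition $\mathds{E}[XY]=\mathds{E}[X]\,\mathds{E}[Y]+\mathrm{Cov}(X,Y)$ to each side and reduces the claim to $\mathrm{Cov}(\mathcal{M}^i_j,(\Delta^i_j)^2)\le\mathrm{Cov}(\mathcal{C}^i_j,(\Delta^i_j)^2)$, which it justifies by the modeling assertion that confidence and semantics share a backbone and are therefore more strongly correlated than the annotation mask is with the error. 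Your tower-property computation corresponds instead to the paper's supplementary ``sampling perspective,'' where a surrogate mask $\tilde{\mathcal{M}}$ drawn from $\mathrm{Bernoulli}(\mathcal{C})$ yields $\mathds{E}\bigl[\sum_j\tilde{\mathcal{M}}_j(\Delta_j)^2\mid\mathbf{x}\bigr]=\sum_j\mathcal{C}_j(\Delta_j)^2$ — an exact equality under the linear-weight reading, with no $c\le c^2$ step ever needed. The obstacle you hit is an artifact of squaring the confidence weight on the right-hand side; under the paper's reading it vanishes.

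Two further problems remain even granting that reading. First, your step that ``upgrades'' $\mathds{E}_{\mathbf{x}}$ to $\mathds{E}_{\mathbf{x},\mathcal{M}}$ silently replaces the \emph{true} foreground mask of the theorem by a random surrogate sampled from $\mathrm{Bernoulli}(\mathcal{C})$; that substitution is the entire content of the theorem and cannot be assumed — it is precisely the gap the paper's covariance argument is meant to bridge. Second, both of your patches change the statement rather than prove it: positing confidences at or above $1$ contradicts the probabilistic interpretation your own Bernoulli step requires, and redefining $\mathcal{M}_{jk}:=\mathbf{1}\{\mathcal{C}_{jk}\ge 1\}$ proves a bound for a mask derived from the model's output, whereas the theorem's $\mathcal{M}$ is the (unavailable) annotation-derived foreground indicator — the whole purpose of the result is to dominate a quantity involving that unknown mask by one computable from model outputs alone. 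To be fair, the paper's own justification of the covariance inequality is heuristic rather than rigorous, so your instinct that something is being assumed here is sound; but the assumption the paper makes is about correlations, not about the scale of $\mathcal{C}$.
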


\begin{proof}
	Without loss of generality, we take the $i$-th term for simplicity and denote the symbols involved as vectors. \begin{equation}
	\begin{aligned}
		  & \mathds{E}_{\mathbf{x}\sim \mathcal{D}} \left\|\mathcal{M}^i(\mathbf{x})\cdot(\hat{\mathcal{S}}^i(\mathbf{x}) - \mathcal{S}^i(\mathbf{x}))\right\|^2_F                                     \\
		= & \mathds{E}_{\mathbf{x}\sim \mathcal{D}} \sum_j \mathcal{M}^i_j(\mathbf{x})\cdot(\hat{\mathcal{S}}^i_j(\mathbf{x}) - \mathcal{S}^i_j(\mathbf{x}))^2                                         \\
		= & \sum_j \mathds{E}_{\mathbf{x}\sim \mathcal{D}}\mathcal{M}^i_j(\mathbf{x})\cdot\mathds{E}_{\mathbf{x}\sim \mathcal{D}}(\hat{\mathcal{S}}^i_j(\mathbf{x}) - \mathcal{S}^i_j(\mathbf{x}))^2 + \\
		  & \mathrm{Cov}\left(\mathcal{M}_j^i(\mathbf{x}), \hat{\mathcal{S}}^i(\mathbf{x}) - \mathcal{S}^i(\mathbf{x}))^2\right).                                                                      
	\end{aligned}
	\end{equation} Similarly, the right of the equation could be expanded as \begin{equation}
	\begin{aligned}
		  & \mathds{E}_{\mathbf{x}\sim \mathcal{D}} \left\|\mathcal{C}^i(\mathbf{x})\odot(\hat{\mathcal{S}}^i(\mathbf{x}) - \mathcal{S}^i(\mathbf{x}))\right\|^2_F                                     \\
		= & \sum_j \mathds{E}_{\mathbf{x}\sim \mathcal{D}}\mathcal{C}^i_j(\mathbf{x})\cdot\mathds{E}_{\mathbf{x}\sim \mathcal{D}}(\hat{\mathcal{S}}^i_j(\mathbf{x}) - \mathcal{S}^i_j(\mathbf{x}))^2 + \\
		  & \mathrm{Cov}\left(\mathcal{C}_j^i(\mathbf{x}), \hat{\mathcal{S}}^i(\mathbf{x}) - \mathcal{S}^i(\mathbf{x}))^2\right).                                                                      
	\end{aligned}
	\end{equation}
	Then, the goal becomes to prove \begin{equation}\label{eq:lls}
	\begin{aligned}
		  & \mathrm{Cov}\left(\mathcal{M}_j^i(\mathbf{x}), \hat{\mathcal{S}}^i(\mathbf{x}) - \mathcal{S}^i(\mathbf{x}))^2\right)       \\
		  & \leq \mathrm{Cov}\left(\mathcal{C}_j^i(\mathbf{x}), \hat{\mathcal{S}}^i(\mathbf{x}) - \mathcal{S}^i(\mathbf{x}))^2\right). 
	\end{aligned}
	\end{equation} Since both confidence and semantics outputs originate from the same backbone, it is inappropriate to assume their correlation. In fact, this correlation should exceed that with the mask. This observation is valid for the diagonal entries of the covariance matrix. As for the non-diagonal entries of the covariance matrix, both sides are zero due to the independency across different samples. Therefore, Equation~\ref{eq:lls} is upheld. The theorem is proved. 
\end{proof}
From a sampling perspective, if the model generates an estimated mask $\hat{\mathcal{M}_j^i}$, sampled from the binomial distribution parameterized by the confidence output in each iteration, it must conform to: \begin{equation}
\mathds{E} \left(\Tilde{\mathcal{M}_j^i}\mid \mathbf{x}\right) = \mathcal{C}_j^i(\mathbf{x}).
\end{equation} 
Then the estimated expectation in the theorem could be computed with the total expectation law: 
\begin{equation}
	\begin{aligned}
		  & \mathds{E}_{\mathbf{x}\sim \mathcal{D}}  \sum_{i} \left\|\Tilde{\mathcal{M}}^i(\mathbf{x})\odot(\hat{\mathcal{S}}^i(\mathbf{x}) - \mathcal{S}^i(\mathbf{x}))\right\|^2_F                                     \\
		= & \mathds{E}_{\mathbf{x}\sim \mathcal{D}}  \sum_{i} \left\|\Tilde{\mathcal{M}}^i(\mathbf{x})\odot(\hat{\mathcal{S}}^i(\mathbf{x}) - \mathcal{S}^i(\mathbf{x}))\right\|^2_F                                     \\ 
		= & \mathds{E}_{\mathbf{x}\sim \mathcal{D}}  \sum_{i} \mathds{E}\left(\left\|\Tilde{\mathcal{M}}^i(\mathbf{x})\odot(\hat{\mathcal{S}}^i(\mathbf{x}) - \mathcal{S}^i(\mathbf{x}))\right\|^2_F | \mathbf{x}\right) \\ 
		= & \mathds{E}_{\mathbf{x}\sim \mathcal{D}}  \sum_{i} \left\|{\mathcal{C}}^i(\mathbf{x})\odot(\hat{\mathcal{S}}^i(\mathbf{x}) - \mathcal{S}^i(\mathbf{x}))\right\|^2_F,                                          
	\end{aligned}
\end{equation} which aligns with the intuitive notion that the model can predict a sufficiently accurate mask for the PTQ.

\begin{table*}[htbp]
	\begin{adjustbox}{max width=0.99\textwidth}
		\begin{tabular}{lcccccccccccccc}
			\toprule
			& \multicolumn{2}{c}{CondLaneNet} & \multicolumn{2}{c}{GANet} & \multicolumn{2}{c}{LSTR} & \multicolumn{2}{c}{BézierLaneNet} & \multicolumn{2}{c}{LaneATT} & \multicolumn{2}{c}{SCNN} & \multicolumn{2}{c}{RESA} \\
			           & Small & Mid   & Small & Mid   & Small & Mid   & Small & Mid   & Small & Mid        & Small & Mid   & Small & Mid   \\
			\midrule
			total\_tp  & 72486 & 73610 & 72662 & 75964 & 59547 & 68266 & 68752 & 69329 & 64173 & 32303      & 69651 & 70185 & 71701 & 72748 \\
			total\_fn  & 32400 & 31276 & 32224 & 28922 & 45339 & 36620 & 36134 & 35557 & 40713 & 72583      & 35235 & 34701 & 33185 & 32138 \\
			total\_fp  & 14489 & 16177 & 12367 & 17727 & 24198 & 27198 & 25785 & 23845 & 14676 & 49915      & 25582 & 26885 & 29878 & 28456 \\
			\midrule
			normal\_tp & 29084 & 29155 & 29116 & 29772 & 25468 & 27761 & 27939 & 28151 & 26974 & 14922      & 28687 & 28699 & 28935 & 29039 \\
			normal\_fn & 3693  & 3622  & 3661  & 3005  & 7309  & 5016  & 4838  & 4626  & 5803  & 17855      & 4090  & 4078  & 3842  & 3738  \\
			normal\_fp & 2472  & 2581  & 1907  & 2281  & 4172  & 3929  & 4060  & 3775  & 2609  & 14031      & 3383  & 3517  & 3606  & 3581  \\
			\midrule
			crowd\_tp  & 18997 & 19500 & 19162 & 19937 & 15927 & 17662 & 17946 & 17940 & 16529 & 7985       & 18111 & 18217 & 18670 & 18922 \\
			crowd\_fn  & 9006  & 8503  & 8841  & 8066  & 12076 & 10341 & 10057 & 10063 & 11474 & 20018      & 9892  & 9786  & 9333  & 9081  \\
			crowd\_fp  & 4701  & 4850  & 3569  & 4852  & 7175  & 8333  & 7582  & 7197  & 4631  & 14178      & 7395  & 7953  & 8382  & 7927  \\
			\midrule
			hlight\_tp & 953   & 993   & 974   & 1090  & 745   & 953   & 887   & 965   & 863   & 373        & 914   & 920   & 975   & 932   \\
			hlight\_fn & 732   & 692   & 711   & 595   & 940   & 732   & 798   & 720   & 822   & 1312       & 771   & 765   & 710   & 753   \\
			hlight\_fp & 362   & 341   & 227   & 321   & 491   & 500   & 539   & 469   & 296   & 783        & 495   & 555   & 556   & 578   \\
			\midrule
			shadow\_tp & 2028  & 2016  & 1972  & 2052  & 1347  & 1706  & 1666  & 1851  & 1514  & 669        & 1679  & 1764  & 1729  & 1882  \\
			shadow\_fn & 848   & 860   & 904   & 824   & 1529  & 1170  & 1210  & 1025  & 1362  & 2207       & 1197  & 1112  & 1147  & 994   \\
			shadow\_fp & 368   & 541   & 373   & 467   & 898   & 968   & 1009  & 828   & 554   & 1382       & 883   & 839   & 1088  & 923   \\
			\midrule
			noline\_tp & 5147  & 5299  & 5278  & 5990  & 3983  & 5084  & 5122  & 5063  & 4298  & 2115       & 4891  & 5168  & 5516  & 5788  \\
			noline\_fn & 8874  & 8722  & 8743  & 8031  & 10038 & 8937  & 8899  & 8958  & 9723  & 11906      & 9130  & 8853  & 8505  & 8233  \\
			noline\_fp & 2025  & 2698  & 1890  & 3465  & 3720  & 5381  & 5219  & 4685  & 2358  & 6101       & 5792  & 6228  & 6960  & 6567  \\
			\midrule
			arrow\_tp  & 2599  & 2604  & 2610  & 2697  & 2065  & 2427  & 2385  & 2457  & 2269  & 1217       & 2536  & 2541  & 2617  & 2645  \\
			arrow\_fn  & 583   & 578   & 572   & 485   & 1117  & 755   & 797   & 725   & 913   & 1965       & 646   & 641   & 565   & 537   \\
			arrow\_fp  & 271   & 249   & 198   & 284   & 515   & 492   & 545   & 424   & 304   & 1461       & 473   & 453   & 481   & 457   \\
			\midrule
			curve\_tp  & 771   & 788   & 835   & 884   & 576   & 684   & 625   & 663   & 630   & 312        & 736   & 721   & 754   & 777   \\
			curve\_fn  & 541   & 524   & 477   & 428   & 736   & 628   & 687   & 649   & 682   & 1000       & 576   & 591   & 558   & 535   \\
			curve\_fp  & 176   & 201   & 120   & 191   & 305   & 349   & 359   & 314   & 212   & 576        & 283   & 272   & 363   & 323   \\
			\midrule
			cross\_tp  & 0     & 0     & 0     & 0     & 0     & 0     & 0     & 0     & 0     & 0          & 0     & 0     & 0     & 0     \\
			cross\_fn  & 0     & 0     & 0     & 0     & 0     & 0     & 0     & 0     & 0     & 0          & 0     & 0     & 0     & 0     \\
			cross\_fp  & 1010  & 1136  & 1317  & 1881  & 1394  & 1417  & 1231  & 995   & 906   & 1936       & 1533  & 1593  & 2092  & 1764  \\
			\midrule
			night\_tp  & 12907 & 13255 & 12715 & 13542 & 9436  & 11989 & 12182 & 12239 & 11096 & 4710       & 12097 & 12155 & 12505 & 12763 \\
			night\_fn  & 8123  & 7775  & 8315  & 7488  & 11594 & 9041  & 8848  & 8791  & 9934  & 16320      & 8933  & 8875  & 8525  & 8267  \\
			night\_fp  & 3104  & 3580  & 2766  & 3985  & 5528  & 5829  & 5241  & 5158  & 2806  & 9467       & 5345  & 5475  & 6350  & 6336  \\
			\bottomrule
		\end{tabular}
	\end{adjustbox}
	\caption{Full metrics on the CULane dataset of our method under W4A4 quantization configuration. Further precision/recall/F1 scores could be derived from those data.}\label{tab:ful}
\end{table*}

\begin{table*}[htbp]
	\centering
	\begin{tabular}{llllllllll}
		\toprule
		Precision & Method & \multicolumn{2}{c}{BezierLaneNet} & \multicolumn{2}{c}{GANet} & \multicolumn{2}{c}{LSTR} & \multicolumn{2}{c}{SCNN} \\
		\midrule
		wa88                  & ACIQ     & 94.96          & 95.35          & 97.60          & 95.80          & 93.95          & 94.04          & 92.47          & 94.73          \\
		\midrule
		\multirow{6}{*}{w8a4} & ACIQ     & 49.99          & 66.54          & 39.29          & 72.36          & 90.59          & 92.70          & 60.85          & 82.88          \\
		                      & OMSE     & 91.95          & 92.73          & 93.61          & 91.79          & 92.02          & 93.05          & 85.17          & 91.99          \\
		                      & AdaRound & 92.85          & 93.73          & 92.08          & 90.77          & 94.05          & 94.41          & 80.19          & 89.44          \\
		                      & BRECQ    & 94.39          & 94.81          & 95.00          & 92.48          & 93.94          & 94.56          & 89.12          & 93.61          \\
		                      & QDrop    & 94.13          & 94.76          & 94.87          & 92.38          & 94.03          & 94.43          & 88.95          & 93.27          \\
		                      & Ours     & \textbf{94.88} & \textbf{95.27} & \textbf{96.60} & \textbf{94.78} & \textbf{94.75} & \textbf{94.63} & \textbf{92.31} & \textbf{94.66} \\
		\midrule
		\multirow{6}{*}{wa44} & ACIQ     & 48.13          & 65.30          & 19.07          & 58.17          & 4.85           & 45.15          & 48.94          & 77.85          \\
		                      & OMSE     & 87.89          & 89.88          & 88.71          & 90.75          & 4.59           & 44.67          & 77.36          & 88.69          \\
		                      & AdaRound & 94.72          & 94.91          & 0.00           & 0.00           & 46.19          & 66.98          & 14.08          & 63.37          \\
		                      & BRECQ    & 94.86          & 95.23          & 95.91          & 94.82          & 77.03          & 87.24          & 67.32          & 85.63          \\
		                      & QDrop    & 94.54          & 95.07          & 96.24          & 94.50          & 42.61          & 64.44          & 90.37          & 93.82          \\
		                      & Ours     & \textbf{95.04} & \textbf{95.33} & \textbf{96.55} & \textbf{94.68} & \textbf{86.22} & \textbf{90.10} & \textbf{91.98} & \textbf{94.70} \\
		\bottomrule
	\end{tabular}
	\caption{TuSimple performance comparison among different quantization algorithms and models, where the first column is the F1 score and the second is the accuracy. W8A8 means the weight and
		activation are all quantized into 8 bits, and so does W4A4 and W8A4. Note that the 8-bit quantization has almost no performance effect on the TuSimple dataset, so we skip the training-based methods on it.}\label{tab:tus}
\end{table*}

\subsection{Semantic and Architectural Discrepancies in Quantization}

The PTQ method, as detailed in \cite{ptq4vit}, targets discrepancies arising from architecture during the forward pass of quantized models. It identifies that the quantization of multi-head attention (MHA) modules can alter the relative order of attention maps, leading to performance deterioration. To address this, it introduces a ranking-aware loss that preserves the order across different attention heads.

Conversely, our research delves into the diverse functionalities in post-processing, highlighting task-specific variance. While prior post-training approaches concentrated on aligning activations between quantized models and their float-point counterparts, we recognize that the significance of these activations varies depending on the semantics.  To the best of our knowledge, this work is the first to leverage non-differential post-processing to quantize detection models, thereby boosting both performance and efficiency.

\cite{ptq4vit} and our work respectively focus on the intermediate outputs of MHA modules, and the network's final outputs in terms of semantic sensitivity. This opens up the possibility of integrating these two methods for enhanced overall performance, given their non-conflicting optimization paths.

\section{Experiments}\label{app:exp}
\subsection{Implementation Details}
The proposed quantization algorithms are structured into three distinct stages: (1) preparation, (2) calibration, and (3) tuning, as detailed below.

\subsubsection{Prepare}\label{appsec:prepare}
To initiate the process, a sensitivity-noise curve for each model is established, which subsequently aids in Sensitivity Aware Selection. We randomly sample 100 images from the training dataset to act as data points. Initially, the full-precision model predicts the lanes of these images, caching outputs from every head. For each head, varying noise levels are applied to these cached outputs. Subsequently, the distorted lanes are decoded from the post-process. The Lane Distortion Scores, calculated at every noise level and data point, act as the proxy function in Sensitivity Aware Selection after being interpolated.

\begin{figure}[htbp]
	\centering
	\includegraphics{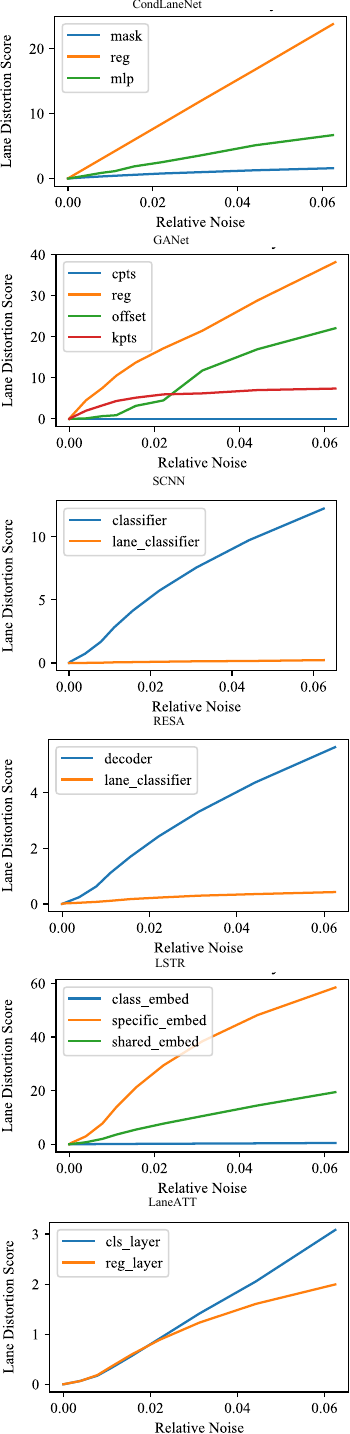}
	\caption{Illustration of inter-head semantic sensitivity of more lane detection models with more datapoints.}
	\label{fig:appill}
\end{figure}

\begin{table*}[h]
	\centering
	\begin{tabular} {cccccccccc}
		\toprule
		Network     & Task     & \# Head & Dataset    & Metric   & FP   & W/A & OMSE & QDrop & Ours \\
		\midrule
		CenterNet   & det 2d   & 3       & COCO       & bbox mAP & 25.9 & 4/4 & 5.8  & 11.0  & 13.3 \\
		CondlaneNet & lane det & 6       & CurveLanes & F1 Score & 85.1 & 4/4 & 76.9 & 81.4  & 82.8 \\
		\bottomrule
	\end{tabular}
	\caption{Abaltaion on other detection tasks and datasets.}
	\label{tab:general}
\end{table*}

\begin{table}[htpb]
	\centering
	\begin{tabular}{c|c|c}
		\toprule
		$k$ & Accuracy & F1 Score \\
		\midrule
		1   & 89.31    & 91.66    \\
		2   & 94.66    & 96.46    \\
		3   & 94.75    & 96.51    \\
		4   & 94.60    & 96.23    \\
		\bottomrule
	\end{tabular}
	\caption{Abaltaion of the hyperparameter effect in the Sensitivity Aware Selection on GANet~(Small) and TuSimple Dataset.}
	\label{tab:kontu}
\end{table}

The sensitivity curve construction needs 100 unlabeled images across 8 noise levels, with 20 reruns at each level, which means every image needs 1 forward and 160 post-processes. It totals approximately 10 minutes per network — a minor cost compared to the training part.
\subsubsection{Calibrate}
For calibration, the quantized models use 512 random images from the training dataset, determining the scale (with a zero-point consistently set at 0 for symmetric quantization). The augmentation follows the full-precision models without any additional modifications. While the OMSE calibration technique is employed, an exponential moving average MSE is also utilized for activations to enhance performance. Once the scales are ascertained, they are frozen during the tuning.
\subsubsection{Tuning}
After obtaining a calibrated quantized model, we implement training-based PTQ. Mini-batches are derived from the calibration data without any further data augmentations. The target outputs are the cached head outputs from the full-precision models, whereas the optimization objective is defined in Equation~(6) for the selected heads. The optimization process spans 5000 iterations, updating the selection after every 2000 iterations.
With block assignment~\cite{brecq, qdrop} and specific tuning strategies~\cite{adaround,adaquant,nwq}, our method sets a new benchmark using straightforward training configurations. This stellar performance is attributed to our well-designed methods for knowledge extraction from the post-process.

\subsection{Full Metrics On The CULane Dataset}
The evaluation on the CULane dataset incorporates metrics for various scenarios. We present the metrics of our W8A4 method in Table~\ref{tab:ful}. A notable observation is a significant discrepancy in false positives among models, indicating that performance primarily hinges on the model's ability to avoid misidentifying lanes under limited precision.

\subsection{Extended Experiments On The TuSimple Dataset}
The TuSimple dataset, comprising images from highways, consists of 3626 training images and 2782 testing images. The primary evaluation metric is accuracy. Given the straightforward nature of the dataset, most models exhibit no quantization issues at higher precisions such as 8-bit. Experiments were therefore conducted on W8A4 and W4A4 for comparison, with the results detailed in Table~\ref{tab:tus}. Notably, our model consistently outperforms others, boasting improvements exceeding 1.5

\subsection{Dynamic Sensitivity}

An ablation study on the dynamic selection's hyper-parameter is provided herein. With four heads in the GANet model, the top-$k$ selection was set to values 1 through 4. Table~\ref{tab:kontu} demonstrates that smaller $k$ values result in underoptimization, while larger ones can cause distraction. Since our intention is to focus on the most sensitive heads, this hyperparameter must be chosen judiciously.

We also documented the relative noise levels of heads, depicted in Figure~\ref{fig:dynaganet}. Initial observations indicate that the noise level in \texttt{reg} and \texttt{cpt} heads are considerably higher, leading to their early selection. This results in an overall reduction in noise levels, possibly due to a decrease in accumulative noise from the backbone and neck. As the research progresses, noise levels equilibrate, leading to the selection of sensitive heads. Although the noise levels in some heads subsequently increase, it has a negligible impact on the post-process, underscoring the efficacy of our selection process. Two key insights emerge: (1) shared components of the model influence noise distribution among heads, and (2) once shared components are stabilized, attention can be diverted to the sensitive heads.

\subsection{Generalization Abalation}

This proposed principle of leveraging semantics is also applicable to other tasks and other complex datasets, due to the common semantical variances across heads. We substantiate it with empirical results in Table~\ref{tab:general}. The normal detection tasks on MS COCO~\cite{mscoco} could be improved by 2.5\% bbox mAP, showing consistent improvement. The curved lane detection on CurveLanes~\cite{curvelane} could also be improved by 1.4\% F1 Score. 
Lane detection's positive region (lane) is much sparser than others (bbox, etc.), which makes the post-process more sensitive and semantics more useful. 

\subsection{Performance Analysis}

The proposed methods incur little overhead in preparation and achieve a large speed-up. 
Before optimization, the sensitivity-noise curves of each model should be built. As shown in Section~\ref{appsec:prepare}, we just need to forward each image once and repeat noised post-processing, which takes only 10 minutes in total.
In the optimization phase of post-training quantization, the proposed methods require 5k iterations in total, compared to 20k iterations per block needed by prior approaches. Semantic focus modeling is the key to efficiency and effectiveness, with a pre-processing in only 10 minutes for each model. Overall, our training is completed in under an hour, while previous methods require from two hours to days.

\begin{figure}[htbp]
	\centering
	\includegraphics{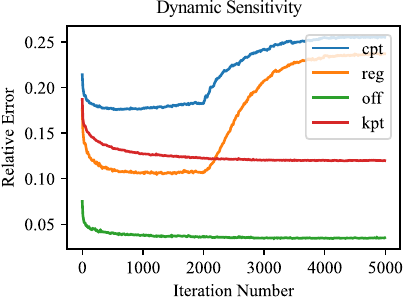}
	\caption{Dynamic noise level of GANet during its PTQ optimization with the proposed method. At the first 2k iterations, \texttt{reg} and \texttt{cpt} are chosen, and the \texttt{kpt} and \texttt{off} are chosen at the left time.}
	\label{fig:dynaganet}
\end{figure}

\end{document}